\DeclareMathAlphabet\mathbfcal{OMS}{cmsy}{b}{n}
\newtheorem{theorem}{Theorem}
\newtheorem{assumption}{Assumption}
\let\NAT@parse\undefined
\crefname{equation}{}{}
\Crefname{equation}{Equation}{Equations}
\crefname{figure}{Fig.}{Figs.}
\Crefname{figure}{Figure}{Figures}
\crefname{table}{Tab.}{Tabs.}
\Crefname{table}{Table}{Tables}
\crefname{section}{Sec.}{Secs.}
\Crefname{section}{Section}{Sections}
\crefname{problem}{Problem}{Problems}
\Crefname{problem}{Problem}{Problems}
\crefname{definition}{Definition}{Definitions}
\Crefname{definition}{Definition}{Definitions}
\crefname{lemma}{Lemma}{Lemmas}
\Crefname{lemma}{Lemma}{Lemmas}
\crefname{theorem}{Thm.}{Thms.}
\Crefname{theorem}{Theorem}{Theorems}
\crefname{remark}{Rmk.}{Rmks.}
\Crefname{remark}{Remark}{Remarks}
\crefname{enumi}{item}{items}
\Crefname{enumi}{Item}{Items}
\crefname{algocf}{Alg.}{Algs.}
\Crefname{algocf}{Algorithm}{Algorithms}
\crefname{assumption}{Asm.}{Asms.}
\Crefname{assumption}{Assumption}{Assumptions}
\def\footnoterule{\relax%
  \kern-5pt
  \hbox to \columnwidth{\vrule width 0.5\columnwidth height 0.4pt\hfill}
  \kern4.6pt}
\title{\LARGE \bf
An observer cascade for velocity and multiple line estimation*
}
\author{Andr\'e Mateus, Pedro U. Lima, and Pedro Miraldo
\thanks{*This work was supported by FCT grant {\tt PD/BD/135015/2017}, LARSyS - FCT Project {\tt UIDB/50009/2020}, and by the European Union from the European Regional Development Fund under the Smart Growth Operational Programme as part of the project {\tt POIR.01.01.01-00-0102/20}, with title ”Development of an innovative, autonomous mobile robot for retail stores”.}
\thanks{A. Mateus, P. U. Lima, and P. Miraldo  are with Institute for Systems and Robotics (ISR/IST), LARSyS, Instituto Superior T\'ecnico, Univ. Lisboa, Portugal \newline
        {\tt\footnotesize \{andre.mateus, pedro.lima, pedro.miraldo\} \newline @tecnico.ulisboa.pt}.}
}
\newcommand\copyrighttext{%
  \footnotesize © 2022 IEEE.  Personal use of this material is permitted.  Permission from IEEE must be obtained for all other uses, in any current or future media, including reprinting/republishing this material for advertising or promotional purposes, creating new collective works, for resale or redistribution to servers or lists, or reuse of any copyrighted component of this work in other works.}
\newcommand\copyrightnotice{%
\begin{tikzpicture}[remember picture,overlay]
\node[anchor=south,yshift=10pt] at (current page.south) {\fbox{\parbox{\dimexpr\textwidth-\fboxsep-\fboxrule\relax}{\copyrighttext}}};
\end{tikzpicture}%
}
\begin{document}

\maketitle
\copyrightnotice

\begin{abstract}

Previous incremental estimation methods consider estimating a single line, requiring as many observers as the number of lines to be mapped.
This leads to the need for having at least $4N$ state variables, with $N$ being the number of lines.
This paper presents the first approach for multi-line incremental estimation.
Since lines are common in structured environments, we aim to exploit that structure to reduce the state space.
The modeling of structured environments proposed in this paper reduces the state space to $3N + 3$ and is also less susceptible to singular configurations.
An assumption the previous methods make is that the camera velocity is available at all times. 
However, the velocity is usually retrieved from odometry, which is noisy.
With this in mind, we propose coupling the camera with an Inertial Measurement Unit (IMU) and an observer cascade. A first observer retrieves the scale of the linear velocity and a second observer for the lines mapping. 
The stability of the entire system is analyzed.
The cascade is shown to be asymptotically stable and shown to converge in experiments with simulated data.

\end{abstract}
\section{Introduction}
\label{sec:intro}

Depth estimation is a well-studied problem in Robotics and Computer Vision.
When several images are available with a significant baseline between them, the typical solution is to solve the Structure-from-Motion problem \cite{koenderink1991,bartoli2005}.
However, depth can only be recovered up to a common scale factor when using monocular vision.
To achieve metric reconstruction, they require either stereo vision or the 3D structure of a set of features to be known.
To use monocular vision and handle small baselines, some authors took advantage of robotic systems. 
Since an estimate of the velocity is usually available (though noisy), the dynamical model of the motion of the observed features in the image concerning the camera velocity can be exploited.
These approaches are based on filtering or observer design.
Filtering approaches tend to exploit the use of Kalman Filter \cite{kalman1960} and its variants.
Examples of filtering approaches are \cite{civera2008,civera2010,omari2013,smith2006,zhang2011}.

A limitation of the previous filtering approaches is their reliance on EKF, which requires linearization of the dynamics.
Thus, the estimation is susceptible to poor initial estimates, which may cause the filter to diverge.
To address the linearization issues, nonlinear observers have been used.
Observer-based methods consist of exploiting the visual dynamical systems developed in Visual Servoing \cite{chaumette2006} and designing a nonlinear estimation scheme (to retrieve 3D information), given the camera velocities and image measurements.
Observer-based approaches were presented in \cite{dixon2003,deluca2008,morbidi2010,sassano2010,dani2012,rodrigues2019,mateus2021}.

Since the focus of the observer-based approaches has been on robotic applications, an assumption that can be made is that we can control the camera's path.
With that in mind, some authors studied how we can control the camera to improve the estimation, i.e., speed-up convergence and prevent ill-conditioned poses.
An \emph{Active Structure-from-Motion} was presented in \cite{spica2013}.
This framework has been successfully applied to different visual features.
Namely, points, cylinders and spheres in \cite{spica2014}, planes and image moments in \cite{spica2015,spica2015b}, rotation invariants in \cite{tahri2015,tahri2017}, and lines in \cite{mateus2018,mateus2019,mateus2021}.
A method to couple this framework with a Visual Servoing control loop is presented in \cite{spica2017}.
An alternative framework for active depth estimation of points is presented in \cite{rodrigues2020}, where convergence guarantees are provided, with fewer constraints on the camera motion relative to \cite{spica2013}.

The previous approaches to depth estimation assume that the camera velocities (system inputs) are known at every time.
However, velocity estimated by the robot's odometry or joint encoders tends to yield noisy readings.
A solution is to attempt to estimate the camera velocity at the same time as the depth.
A method that allows recovering both linear velocity and point depth exploiting an unknown input observer is presented in \cite{benyoucef2019,benyoucef2021}.
However, they required the angular acceleration to be known.

The majority of the state-of-the-art depth observers consider a single feature and are based on the assumption that the velocity of the camera (both linear and angular) is available, \cite{spica2013,rodrigues2020,mateus2021}.
This work is the first to address the multiple-line estimation problem using a nonlinear observer. 
An issue arising in multiple-feature scenarios is ill-posed configurations, which can make the observer diverge and become unable to recover.
To prevent such configurations, one can make assumptions about the environment.
Since we are concerned with reconstructing 3D lines, a common feature in human-made environments, the Manhattan World assumption should hold.
This consists of assuming that the lines and planes in the scene are either parallel or perpendicular to each other.
Thus, in this work, we exploit the Manhattan World assumption for multiple-line estimation.

Even though most robots provide velocity readings, e.g., from encoders placed in the joint motors, those are usually noisy.
Since the observer takes the velocity readings at face value, it is susceptible to that noise.
Given the low price of Inertial Measurement Units (IMU), those have become widely available and present in robots. However, noise is present in IMU, as, in any sensor, these are much more accurate than traditional motor encoders in estimating angular velocity.
Furthermore, they provide linear acceleration readings, which can be integrated or filtered to estimate the linear velocity.
With that in mind, this work aims to exploit visual-inertial systems to estimate both depth and linear velocity by employing an observer cascade.
\section{Background}
\label{sec:probDefandBack}

For completeness, we present the background material used in this paper. We start by describing the equations of motion of a line feature as presented in \cite{andreff2002}. 
Then, the first observer in the cascade, which takes optical flow measurement, and the IMU readings to estimate the camera linear velocity, is presented.

\subsection{Equations of motion of 3D Lines}
\label{sec:lineDynamics}

Let us consider a line defined with binormalised Pl\"ucker coordinates, presented in \cite{andreff2002}.
In those coordinates, a line is defined by a unit direction vector $\mathbf{d}$, a unit moment vector\footnote{The line moment is a vector perpendicular to the plane defined by the line and the camera optical center.} $\mathbf{n}$, and the line depth\footnote{The line depth is defined as the shortest distance from the camera to the line.} $l$.
The dynamics of the binormalised Pl\"ucker coordinates are given as
\begin{align}
    \dot{\mathbf{d}} & =  \bm{\omega}_c \times \mathbf{d} \label{eq:ddyn} \\
    \dot{\mathbf{n}} & = \bm{\omega}_c \times \mathbf{n} - \frac{\bm{\nu}_c^T\mathbf{n}}{l}( \mathbf{d}\times \mathbf{n}) \label{eq:hdyn} \\
    \dot{l} & =  \bm{\nu}_c^T(\mathbf{d} \times \mathbf{n}), \label{eq:depthdyn}\
\end{align}
where $\bm{\nu}_c\in\mathbb{R}^3$ is the camera linear velocity, and $\bm{\omega}_c\in\mathbb{R}^3$ is the camera angular velocity. Thus, the dynamical systems is described by \cref{eq:ddyn}, \cref{eq:hdyn}, and \cref{eq:depthdyn}, the inputs are the camera velocities $\mathbf{v}_c =  \bm{\nu}_c,\ \bm{\omega}_c$, and the output is the normalized moment vector $\mathbf{y} = \mathbf{n}$.
The state parameters to be estimated are $l$ and $\mathbf{d}$. 

\subsection{An Observer for Plane Depth and Linear Velocity Estimation}
\label{sec:velPlaneDepthEst}

Let $\mathcal{C}$ denote the camera frame, $\mathcal{W}$ denote the world frame, $\mathcal{I}$ represent the IMU frame, $\mathbf{R}_{\mathcal{I}}^\mathcal{C}$ is the rotation from the IMU to camera frame, and $\mathbf{t}_{\mathcal{I}}^\mathcal{C}$ is the translation between both frames.
Furthermore, let the camera be looking at a planar surface. 
By combining the IMU measures with Optical flow, it is possible to obtain a simplified continuous homography matrix \cite{ma2012}.
Let us define a plane $\bm{\pi} = [\mathbf{m}, \rho]$, where $\mathbf{m}$ is the unit normal vector to the plane, and $\rho$ the plane offset.
Then, decomposing this matrix, an estimate for the plane normal $\mathbf{m}$, and the camera linear velocity scaled by the inverse plane depth $\frac{\bm{\nu}_c}{\rho}$ can be obtained as shown in \cite{grabe2015}.
Thus, the goal is to retrieve the plane depth, given the angular velocity, linear acceleration, the plane normal, and the linear velocity scaled by the inverse plane depth.

Since we are interested in exploiting nonlinear state observers, let us define the state vector.
The state is given as $[\mathbf{s},\psi]$, with $\mathbf{s} = \frac{\bm{\nu}_c}{\rho}$, and $\psi = \frac{1}{\rho}$.
The dynamics is presented in \cite{grabe2015}, and is given as
\begin{equation}
\begin{split}
    \dot{\mathbf{s}} & = -[\bm{\omega}_c]_{\times}\mathbf{s} + -\mathbf{s}\mathbf{s}^T\mathbf{m} + (\mathbf{R}_{\mathcal{I}}^\mathcal{C} \mathbf{a}_{\mathcal{I}} + [\bm{\omega}_c]_{\times}^2 \mathbf{t}_{\mathcal{I}}^\mathcal{C}) \psi \\
    \dot{\psi} & = -\psi \mathbf{s}^T\mathbf{m},
\end{split}
\label{eq:velPlaneDepthDyn}
\end{equation}
where $\mathbf{a}_{\mathcal{I}}$ is the linear acceleration in IMU frame, after compensating for the gravity acceleration.
One way to compensate for the gravity acceleration is to use magnetometer measurements to retrieve the gravity direction. Then, that direction is aligned with one of the axis (usually the $z$ axis) by computing $\mathbf{R}_\mathcal{W}^\mathcal{I}$. Finally, we obtain $\mathbf{a}_\mathcal{I} = \mathbf{f}_\mathcal{I} + \mathbf{R}_\mathcal{W}^\mathcal{I} [0,0,g]^T$.
An observer for the system in \cref{eq:velPlaneDepthDyn} is given as
\begin{equation}
    \begin{split}
        \dot{\hat{\mathbf{s}}} & = -[\bm{\omega}_c]_{\times}\mathbf{s} + -\mathbf{s}\mathbf{s}^T\mathbf{m} + \bm{\Omega}^T \hat{\psi} + k_{\mathbf{s}}\tilde{\mathbf{s}} \\
        \dot{\hat{\psi}} & = -\hat{\psi} \mathbf{s}^T\mathbf{m} + k_{\rho}\bm{\Omega}\tilde{\mathbf{s}},
    \end{split}
    \label{eq:velPlaneDepthObs}
\end{equation}
where $\tilde{\mathbf{s}} = \mathbf{s} - \hat{\mathbf{s}}$, $k_{\mathbf{s}}$ \& $k_{\rho}$ are positive gains, and $\bm{\Omega} = (\mathbf{R}_{\mathcal{I}}^\mathcal{C} \mathbf{a}_{\mathcal{I}} + [\bm{\omega}_c]_{\times}^2 \mathbf{t}_{\mathcal{I}}^\mathcal{C})^T$.
The observer in \cref{eq:velPlaneDepthObs} is shown to be exponentially stable in \cite{grabe2015}.
\section{Lines in Manhattan World}
\label{sec:mwLines}

This section exploits the Manhattan World assumption \cite{Coughlan1999,Coughlan2000} to derive a new dynamic model of lines in the Manhattan World.
Let us now consider that we have a set of lines in Manhattan World.
We can compute the orthogonal directions by using, for example, a state-of-the-art method as \cite{lu2017}.
By stacking the direction vectors, we can obtain an orthonormal basis, which is given as
\begin{equation}
    \mathbf{R}_{\mathcal{C}}^{\mathcal{W}} = \begin{bmatrix} \mathbf{d}_1^T \\ \mathbf{d}_2^T \\ \mathbf{d}_3^T \end{bmatrix}.
    \label{eq:dortho}
\end{equation}
Since, $\mathbf{R}_{\mathcal{C}}^{\mathcal{W}}$ is an orthogonal matrix which can be represented with a reduced number of degrees-of-freedom, from nine (three vectors in $\mathbb{R}^3$) to three.
This reduced representation can be achieved by using Cayley parameters.
Starting from the orthogonal matrix $\mathbf{R}_{\mathcal{C}}^{\mathcal{W}}$ the Cayley parameters can be recovered using the Cayley transform \cite{Krantz2012}:
\begin{equation}
    \mathbf{G} = (\mathbf{R}_{\mathcal{C}}^{\mathcal{W}} - \mathbf{I})(\mathbf{R}_{\mathcal{C}}^{\mathcal{W}} + \mathbf{I})^{-1},
    \label{eq:cayleytransform}
\end{equation}
where $\mathbf{I}$ is an identity matrix,
\begin{equation}
    \mathbf{G} = \begin{bmatrix} 0 & -c_3 & c_2 \\ c_3 & 0 & -c_1 \\ -c_2 & c_1 & 0 \end{bmatrix}
    \label{eq:cayleyskew}
\end{equation}
is a skew-symmetric matrix, and $\mathbf{c} = \begin{bmatrix}c_1 & c_2 & c_3\end{bmatrix}^T$ are the Cayley parameters.
These parameters allow us to reduce the size of the state space of the dynamical system in \cref{sec:mwdyn}.
Since we can use three variables (Cayley parameters) instead of nine (entries of the rotation matrix) to represent the camera rotation in the state.

If we now project the moment vector, of each line, onto $\mathbf{R}_{\mathcal{C}}^{\mathcal{W}}$, we can reduce its degrees-of-freedom from three to two.
Thus, let us define these projections as
\begin{equation}
    \mathbf{o}_i = \mathbf{R}_{\mathcal{C}}^{\mathcal{W}}\mathbf{n}_i.
    \label{eq:tau}
\end{equation}
From the orthogonality of the Pl\"ucker coordinates\footnote{Notice that since the moment is perpendicular to a plane containing the line, it is also perpendicular to the line direction.}, we can conclude that the coordinate of the vector $\mathbf{o}_i$ corresponding to the projection of the moment to the respective direction is equal to zero.
Taking the inverse of \cref{eq:tau} we can define the moment vector as
\begin{equation}
    \mathbf{n}_i = o_{i1}\mathbf{d}_1 + o_{i2}\mathbf{d}_2 + o_{i3}\mathbf{d}_3,
    \label{eq:hlincomb}
\end{equation}
with $i= 1,...,N$, and $N$ being the number of lines. Furthermore, $o_{ij}, j = 1,2,3$ are the entries of $\mathbf{o}_i$ corresponding to the projection of $\mathbf{n}_i$ to each of the three principal directions.
Depending on the direction of the line $i$, one of $o_{ij}$ will be equal to zero.

\subsection{Dynamics of the Manhattan World Lines}
\label{sec:mwdyn}

Let us define the unknown variables to be
\begin{equation}
    \chi_i = \frac{1}{l_i},
    \label{eq:mwchi}
\end{equation}
where $l_i$ is the depth of $i^{th}$ line.
Let us assume that the lines have been assigned to one of the orthogonal directions.
Furthermore, when applying \cref{eq:tau}, we know which component of vector $\mathbf{o}_i$ is equal to zero.
To design an observer for the system, whose state is comprised of the parameters $o_{ij}$, $c_j$, and $\chi_i$, with $i = 1,...,N$, and $j = 1,2,3$, we must compute the dynamics of our system.

Applying the quotient derivative rule and replacing \cref{eq:depthdyn} and \cref{eq:mwchi}, we obtain the dynamics of $\chi_i$ as
\begin{equation}
    \dot{\chi_i} = \bm{\nu}_c^T (\mathbf{n}_i \times \mathbf{d}_j)\chi_i^2.
    \label{eq:chii_derivative}
\end{equation}
Then, replacing \cref{eq:hlincomb} in \cref{eq:chii_derivative}, yields
\begin{equation}
    \dot{\chi_i} = \bm{\nu}_c^T ( (o_{i1}\mathbf{d}_1 + o_{i2}\mathbf{d}_2 + o_{i3}\mathbf{d}_3) \times \mathbf{d}_j)\chi_i^2 .
    \label{eq:chii_derivativeFull}
\end{equation}
The line $i$ can belong to one of the three directions. So depending on the value of $j$, one obtains the dynamics of the inverse depth of the line.  

The three directions of the Manhattan frame are the basis vectors of the orthogonal matrix $\mathbf{R}_{\mathcal{C}}^{\mathcal{W}}$, from which the Cayley parameters are retrieved.
To compute the dynamics of these parameters, let us take \cref{eq:ddyn} and \cref{eq:dortho} to obtain the time derivative of $\mathbf{R}_{\mathcal{C}}^{\mathcal{W}}$, which is given as
\begin{equation}
    \dot{\mathbf{R}}_{\mathcal{C}}^{\mathcal{W}} = -\mathbf{R}_{\mathcal{C}}^{\mathcal{W}} [\bm{\omega_c}]_{\text{x}}.
    \label{eq:dorthodyn}
\end{equation}
By inverting the Cayley transform in \cref{eq:cayleytransform}, computing the time derivative, and, finally, equating to \cref{eq:dorthodyn}, we get the time derivative of the Cayley parameters as
\begin{equation}
    \begin{bmatrix} \dot{c}_1\\ \dot{c}_2\\ \dot{c}_3 \end{bmatrix} = \underbrace{ -\frac{1}{2} \begin{bmatrix} 1 + c_1^2 & c_1c_2 - c_3 & c_1c_3 + c_2 \\ c_1c_2 + c_3 & 1+c_2^2 & c_2c_3 - c_1 \\ c_1c_3 - c_2 & c_2c_3 + c_1 & 1+c_3^2 \end{bmatrix}}_{\mathbf{Q}\in\mathbb{R}^{3\times 3}} \bm{\omega}_c.
    \label{eq:cayleydyn}
\end{equation}

The remaining state variables are the $\mathbf{o}_i$.
Let us compute the time derivative of \cref{eq:tau}. Making use of \cref{eq:ddyn}, \cref{eq:hdyn}, and \cref{eq:tau}, we obtain:
\begin{equation}
    \dot{\mathbf{o}}_i = -\mathbf{R}_{\mathcal{C}}^{\mathcal{W}} [\bm{\omega_c}]_{\text{x}}  \mathbf{n}_i + \mathbf{R}_{\mathcal{C}}^{\mathcal{W}} \left( [\bm{\omega_c}]_{\text{x}} \mathbf{n}_i + \bm{\nu}_c^T\mathbf{n}_i( [\mathbf{n}_i]_{\text{x}} \mathbf{d}_j) \right)\chi_i.
    \label{eq:tau_derivative_full}
\end{equation}
Depending on the principal direction to which the line $i$ belongs, a different coordinate of vector $\mathbf{o}_i$ will be equal to zero.

\section{Observer Design for Multiple Lines}
\label{sec:mwLinesObs}

Let us consider $D_1$ lines with direction $\mathbf{d}_1$, $D_2$ lines with direction $\mathbf{d}_2$, and $D_3$ lines with direction $\mathbf{d}_3$, then $N = D_1 + D_2 + D_3$.
Using the representation in \cref{sec:mwLines} we have $2N$ variables to describe the vectors $\mathbf{o}_i$, three Cayley parameters, and $N$ depths.
Recalling that we can retrieve the lines' moments and MW frame rotation with the Manhattan World assumption, we have $2N+3$ measures and $N$ unknowns.

Furthermore, let $\bm{\tau}_i$ be a two-dimensional vector, with each entry corresponding to the non-zero elements of $\mathbf{o}_i$.
Besides, let the dynamics of the entire system be written as
\begin{equation}
    \begin{split}
        \dot{\mathbf{c}} & = \mathbf{Q}\bm{\omega}_c \\
        \dot{\bm{\tau}}_i & = \mathbf{T}_i(\bm{\tau}_i,\mathbf{c})^T\bm{\nu}_c \chi_i \\
        \dot{\chi}_i & = \mathbf{X}_i(\bm{\tau}_i,\mathbf{c})^T\bm{\nu}_c \chi_i^2,
    \end{split}
    \label{eq:mwdyn}
\end{equation}
with $i = 1,...,N$, $\mathbf{T}_i \in \mathbb{R}^{3\times2}$ is obtained from \cref{eq:tau_derivative_full}, depending on the line direction; and $\mathbf{X}_i \in \mathbb{R}^{3\times1}$ is given by \cref{eq:chii_derivativeFull}.
Defining $\hat{\mathbf{c}}$, $\hat{\bm{\tau}}_i$, and $\hat{\chi}_i$ to be estimates of the state variables, and $\tilde{\mathbf{c}} = \mathbf{c} - \hat{\mathbf{c}}$, $\tilde{\bm{\tau}}_i = \bm{\tau}_i - \hat{\bm{\tau}}_i$, and $\tilde{\chi}_i = \chi_i - \hat{\chi}_i$ are the state estimation errors.
An observer for the system is described as
\begin{equation}
    \begin{split}
        \dot{\hat{\mathbf{c}}} & = \mathbf{Q}\bm{\omega}_c + k_{\mathbf{c}}\tilde{\mathbf{c}} \\
        \dot{\hat{\bm{\tau}}}_i & = \mathbf{T}_i(\bm{\tau}_i,\mathbf{c})^T\bm{\nu}_c \hat{\chi}_i + k_{\bm{\tau}_i}\tilde{\bm{\tau}}_i \\
        \dot{\hat{\chi}}_i & = \mathbf{X}_i(\bm{\tau}_i,\mathbf{c})^T\bm{\nu}_c \hat{\chi}_i^2 + k_{\chi_i}(\mathbf{T}_i(\bm{\tau}_i,\mathbf{c})^T\bm{\nu}_c)^T\tilde{\bm{\tau}}_i,
    \end{split}
    \label{eq:mwobsdyn}
\end{equation}
where $k_{\mathbf{c}}$, $k_{\bm{\tau}_i}$, and $k_{\chi_i}$ are positive gains.
The error dynamics are then given as
\begin{equation}
    \begin{split}
        \dot{\tilde{\mathbf{c}}} & = - k_{\mathbf{c}}\tilde{\mathbf{c}} \\
        \dot{\tilde{\bm{\tau}}}_i & = \mathbf{T}_i(\bm{\tau}_i,\mathbf{c})^T\bm{\nu}_c \tilde{\chi}_i - k_{\bm{\tau}_i}\tilde{\bm{\tau}}_i \\
        \dot{\tilde{\chi}}_i & = \mathbf{X}_i(\bm{\tau}_i,\mathbf{c})^T\bm{\nu}_c (\chi_i + \hat{\chi}_i)\tilde{\chi}_i - k_{\chi_i}(\mathbf{T}_i(\bm{\tau}_i,\mathbf{c})^T\bm{\nu}_c)^T\tilde{\bm{\tau}}_i.
    \end{split}
    \label{eq:mwerrordyn}
\end{equation}

We now study the stability of the error dynamics in \cref{eq:mwerrordyn}, namely of the equilibrium point $[\tilde{\mathbf{c}},\tilde{\bm{\tau}}_i,\tilde{\chi}_i]^T = \mathbf{0}$.
Let us consider the following assumption.
\begin{assumption}
    The depth of a line must be positive at all times, \emph{i.e.}, $l > 0$.
    \label{assump:depth}
\end{assumption}
From the definition of the line depth, $l \geq 0$. We restrict this further since if $l = 0$, we reach a geometrical singularity. Specifically, the projection of a line is a single point in the image.
We can now state the following theorem.
\begin{theorem}
    If \cref{assump:depth} holds along with:
    \begin{enumerate}
        \item the cosine of the angle between the linear velocity and the line closest point is negative, if the depth estimate is positive, and zero otherwise, $\begin{array}{ll}
              \mathbf{X}_i(\bm{\tau}_i,\mathbf{c})^T\bm{\nu}_c \leq 0 & \text{if} \; \hat{\chi}_i > 0\\
              \mathbf{X}_i(\bm{\tau}_i,\mathbf{c})^T\bm{\nu}_c = 0 & \text{otherwise}
            \end{array}$,
        \item the persistence of excitation \cite[Lemma A.3]{marino2010} is verified, $\bm{\nu}_c^T\mathbf{T}_i(\bm{\tau}_i,\mathbf{c})\mathbf{T}_i(\bm{\tau}_i,\mathbf{c})^T\bm{\nu}_c > 0$,
    \end{enumerate}
    with $i = 1,...,N$, then, $[\tilde{\mathbf{c}},\tilde{\bm{\tau}}_i,\tilde{\chi}_i]^T = \mathbf{0}$ is asymptotically stable.
    \label{theo:lineStable}
\end{theorem}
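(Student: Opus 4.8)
The plan is to build a cascade (triangular) Lyapunov argument that mirrors the block structure of the error dynamics in \cref{eq:mwerrordyn}. The $\tilde{\mathbf{c}}$ subsystem is decoupled and linear, $\dot{\tilde{\mathbf{c}}} = -k_{\mathbf{c}}\tilde{\mathbf{c}}$, so it is globally exponentially stable; I would dispose of it immediately with $V_{\mathbf{c}} = \tfrac{1}{2}\tilde{\mathbf{c}}^T\tilde{\mathbf{c}}$, giving $\dot V_{\mathbf{c}} = -k_{\mathbf{c}}\|\tilde{\mathbf{c}}\|^2$. Because the matrices $\mathbf{T}_i$ and $\mathbf{X}_i$ in the remaining equations are evaluated at the \emph{true} $(\bm{\tau}_i,\mathbf{c})$ — not the estimates — the $(\tilde{\bm{\tau}}_i,\tilde{\chi}_i)$ blocks do not actually depend on $\tilde{\mathbf{c}}$ at all, so the system is in fact block-diagonal across $i$ and across the $\mathbf{c}$-block; each line $i$ can be treated independently, and it suffices to prove $[\tilde{\bm{\tau}}_i,\tilde{\chi}_i] = \mathbf{0}$ is asymptotically stable for a single $i$.

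For a fixed $i$, write $\phi(t) := \mathbf{T}_i(\bm{\tau}_i(t),\mathbf{c}(t))^T\bm{\nu}_c(t) \in \mathbb{R}^2$ and $g(t) := \mathbf{X}_i(\bm{\tau}_i(t),\mathbf{c}(t))^T\bm{\nu}_c(t) \in \mathbb{R}$, treating these as known bounded time signals along the trajectory. The error system becomes
\begin{equation}
\begin{split}
    \dot{\tilde{\bm{\tau}}}_i &= \phi(t)\,\tilde{\chi}_i - k_{\bm{\tau}_i}\tilde{\bm{\tau}}_i,\\
    \dot{\tilde{\chi}}_i &= g(t)\,(\chi_i + \hat{\chi}_i)\,\tilde{\chi}_i - k_{\chi_i}\,\phi(t)^T\tilde{\bm{\tau}}_i.
\end{split}
\end{equation}
I would use the candidate $V_i = \tfrac{1}{2}\tilde{\bm{\tau}}_i^T\tilde{\bm{\tau}}_i + \tfrac{1}{2k_{\chi_i}}\tilde{\chi}_i^2$. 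The cross terms $\tilde{\bm{\tau}}_i^T\phi\,\tilde{\chi}_i$ cancel against $-\tfrac{1}{k_{\chi_i}}\tilde{\chi}_i\,k_{\chi_i}\phi^T\tilde{\bm{\tau}}_i$ by the choice of the $k_{\chi_i}$ weighting in the observer gain, leaving
\begin{equation}
    \dot V_i = -k_{\bm{\tau}_i}\|\tilde{\bm{\tau}}_i\|^2 + \frac{1}{k_{\chi_i}}\,g(t)\,(\chi_i + \hat{\chi}_i)\,\tilde{\chi}_i^2.
\end{equation}
Hypothesis 1 is exactly what makes the last term non-positive: when $\hat\chi_i>0$ (equivalently, since $\chi_i>0$ by \cref{assump:depth}, when $\chi_i+\hat\chi_i>0$), we have $g(t)\le 0$; and when $\hat\chi_i\le 0$, $g(t)=0$, so either way the product $g(t)(\chi_i+\hat\chi_i)\tilde\chi_i^2\le 0$. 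Hence $\dot V_i \le -k_{\bm{\tau}_i}\|\tilde{\bm{\tau}}_i\|^2 \le 0$, so the origin is stable and $\tilde{\bm{\tau}}_i\to 0$, with $\tilde{\bm{\tau}}_i$, $\tilde\chi_i$ bounded.

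It remains to upgrade this to convergence of $\tilde\chi_i$, and this is where hypothesis 2 (persistence of excitation) enters and is the main obstacle. From $\dot V_i\le -k_{\bm{\tau}_i}\|\tilde{\bm{\tau}}_i\|^2$ one gets $\tilde{\bm{\tau}}_i\in L_2$, and from the $\dot{\tilde{\bm{\tau}}}_i$ equation (with $\tilde\chi_i$, $\phi$ bounded) $\dot{\tilde{\bm{\tau}}}_i$ is bounded, so $\tilde{\bm{\tau}}_i\to 0$ and $\dot{\tilde{\bm{\tau}}}_i \to 0$ by Barbalat. Then $\phi(t)\,\tilde\chi_i = \dot{\tilde{\bm{\tau}}}_i + k_{\bm{\tau}_i}\tilde{\bm{\tau}}_i \to 0$. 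The PE condition $\int_t^{t+T}\bm{\nu}_c^T\mathbf{T}_i\mathbf{T}_i^T\bm{\nu}_c\,ds = \int_t^{t+T}\|\phi(s)\|^2\,ds \ge \alpha > 0$, invoked via \cite[Lemma A.3]{marino2010}, forces $\phi(t)\tilde\chi_i\to 0$ to imply $\tilde\chi_i\to 0$ — the scalar $\tilde\chi_i$ (which, being a solution of a scalar linear-in-$\tilde\chi_i$ ODE, varies slowly enough) cannot stay bounded away from zero while $\phi^T\phi$ is repeatedly bounded below. I would cite Lemma A.3 of \cite{marino2010} for this last implication rather than reprove it. Assembling: $\tilde{\mathbf{c}}\to 0$ exponentially, and for each $i$, $\tilde{\bm{\tau}}_i\to 0$ and $\tilde\chi_i\to 0$, with Lyapunov stability from the combined $V = V_{\mathbf{c}} + \sum_i V_i$; hence $[\tilde{\mathbf{c}},\tilde{\bm{\tau}}_i,\tilde\chi_i]^T=\mathbf{0}$ is asymptotically stable. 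The delicate point to state carefully is that hypothesis 1 only gives a \emph{negative semidefinite} $\dot V$ (no direct damping on $\tilde\chi_i$), so the argument genuinely needs the Barbalat/PE step and cannot conclude from $\dot V$ alone.
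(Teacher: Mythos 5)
Your proposal is correct and takes essentially the same route as the paper: the same Lyapunov function (written per line rather than summed), the same use of condition 1 together with \cref{assump:depth} to make $\dot V$ negative semidefinite, and the same Barbalat-plus-persistence-of-excitation argument via \cite[Lemma A.3]{marino2010} to pass from $\mathbf{T}_i(\bm{\tau}_i,\mathbf{c})^T\bm{\nu}_c\,\tilde{\chi}_i \rightarrow 0$ to $\tilde{\chi}_i \rightarrow 0$. The only detail to tighten is your claim that $\dot{\tilde{\bm{\tau}}}_i \rightarrow 0$ ``by Barbalat'' from boundedness of $\dot{\tilde{\bm{\tau}}}_i$ alone: that step needs uniform continuity of $\dot{\tilde{\bm{\tau}}}_i$, i.e., boundedness of $\ddot{\tilde{\bm{\tau}}}_i$, which the paper supplies explicitly by computing the second derivative and observing it is composed of bounded signals.
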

\begin{proof}
    Let us consider the Lyapunov candidate function given as
    \begin{equation}
        V(\tilde{\mathbf{c}},\tilde{\bm{\tau}}_i,\tilde{\chi}_i) = \frac{1}{2}\sum_{i = 1}^{N} \left( \tilde{\bm{\tau}}_i^T\tilde{\bm{\tau}}_i + \frac{1}{k_{\chi_i}} \tilde{\chi}_i^2 \right) + \frac{1}{2} \tilde{\mathbf{c}}^T\tilde{\mathbf{c}}.
        \label{eq:lyapLine}
    \end{equation}
    Taking the time derivative and replacing \cref{eq:mwerrordyn} yields
    \begin{multline}
        \dot{V}(\tilde{\mathbf{c}},\tilde{\bm{\tau}}_i,\tilde{\chi}_i) = -k_{\mathbf{c}}\|\tilde{\mathbf{c}}\|^2 + \sum_{i = 1}^N \left( -k_{\bm{\tau}_i}\|\tilde{\bm{\tau}}_i\|^2 + \right. \\ \left. \frac{1}{k_{\chi_i}}\mathbf{X}_i(\bm{\tau}_i,\mathbf{c})^T\bm{\nu}_c (\chi_i + \hat{\chi}_i)\tilde{\chi}_i^2 \right).
        \label{eq:lyapunovDerivative}
    \end{multline}
    From \cref{assump:depth}, and the conditions in \cref{theo:lineStable}, we conclude that all terms are non-positive, and thus $\dot{V} \leq 0$, and $[\tilde{\mathbf{c}},\tilde{\bm{\tau}}_i,\tilde{\chi}_i]^T = \mathbf{0}$ is stable. Furthermore, we can conclude that, the Lyapunov candidate function in \cref{eq:lyapLine} is decrescent -- as a function of time -- and thus the signals $[\tilde{\mathbf{c}},\tilde{\bm{\tau}}_i,\tilde{\chi}_i]^T$ are bounded.
    
    The critical scenario to show asymptotically stability is $\mathbf{X}_i(\bm{\tau}_i,\mathbf{c})^T\bm{\nu}_c = 0$, with $i = 1,...,N$, which can happen if the linear velocity has the same direction as the line.
    For the sake of simplicity, we address the case when all $\mathbf{X}_i(\bm{\tau}_i,\mathbf{c})^T\bm{\nu}_c = 0$, the case when only a subset is null, can be shown similarly.
    
    Let us consider the second time derivative of $V$ -- for the scenario described above -- which is given as
    \begin{multline}
        \ddot{V} (\tilde{\mathbf{c}},\tilde{\bm{\tau}}_i,\tilde{\chi}_i) = 2 k_{\mathbf{c}}^2\|\tilde{\mathbf{c}}\|^2 + \sum_{i = 1}^N \left( 2 k_{\bm{\tau}_i}^2\|\tilde{\bm{\tau}}_i\|^2 - \right. \\ \left.
        k_{\bm{\tau}_i}\bm{\tau}_i^T\mathbf{T}_i(\bm{\tau}_i,\mathbf{c})^T\bm{\nu}_c \tilde{\chi}_i  \right),
    \end{multline}
    which is composed of bounded signals, and thus is bounded. So, $\dot{V}$ is uniformly continuous and by Barbalat's Lemma \cite[Lemma8.2]{khalil2015}, we conclude that $\dot{V} \rightarrow 0$ as $t \rightarrow \infty$, and thus $\tilde{\mathbf{c}} \rightarrow 0$, and $\tilde{\bm{\tau}}_i \rightarrow 0$.
    To assess asymptotic stability of $\tilde{\chi}_i$, let us compute the second time derivative of $\tilde{\bm{\tau}}_i$. Notice that, the analysis of the signal $\tilde{\mathbf{c}}$ is not considered since it is not influenced by $\tilde{\chi}_i$. 
    The second derivative is
    \begin{multline}
        \ddot{\tilde{\bm{\tau}}}_i = - k_{\bm{\tau}_i}^2\tilde{\bm{\tau}}_i +  \dot{\mathbf{T}}_i(\bm{\tau}_i,\mathbf{c})^T\bm{\nu}_c \tilde{\chi}_i +\\ \mathbf{T}_i(\bm{\tau}_i,\mathbf{c})^T\dot{\bm{\nu}}_c \tilde{\chi}_i +  \mathbf{T}_i(\bm{\tau}_i,\mathbf{c})^T\bm{\nu}_c \dot{\tilde{\chi}}_i,
    \end{multline}
    which is a function of bounded signals. Thus by applying again Barbalat's Lemma, we conclude that $\dot{\tilde{\bm{\tau}}}_i \rightarrow 0$ as $t \rightarrow \infty$. Taking the limit, we obtain
    \begin{equation}
        \lim_{t\to\infty} \dot{\tilde{\bm{\tau}}}_i = \lim_{t\to\infty} \mathbf{T}_i(\bm{\tau}_i,\mathbf{c})^T\bm{\nu}_c \lim_{t\to\infty} \tilde{\chi}_i = 0.
    \end{equation}
    So either $\tilde{\chi}_i \rightarrow 0$ or $\mathbf{T}_i(\bm{\tau}_i,\mathbf{c})^T\bm{\nu}_c \rightarrow 0$.
    
    If the signal $\mathbf{T}_i(\bm{\tau}_i,\mathbf{c})^T\bm{\nu}_c$ is persistently exciting, then the lines depth estimation error is asymptotically stable.
    The persistence of excitation condition \cite[Lemma A.3]{marino2010} is verified if $\bm{\nu}_c^T\mathbf{T}_i(\bm{\tau}_i,\mathbf{c})\mathbf{T}_i(\bm{\tau}_i,\mathbf{c})^T\bm{\nu}_c > 0$ with $i = 1,...,N$, which by assumption holds.
    Thus the origin of the dynamical system in \cref{eq:mwerrordyn} is asymptotically stable.
\end{proof}
\section{Linear Velocity and Line Estimation}
\label{sec:velPointDepthEstimation}

This section presents a scheme to estimate both the camera linear velocity and the 3D lines in Manhattan World.
The approach consists of first estimating the linear velocity and the depth of a planar target using the observer in \cref{eq:velPlaneDepthObs}.
Then, the linear velocity estimate is used as input to the observer in \cref{sec:mwLinesObs}.

Let us make the following assumption
\begin{assumption}
    A planar target is in front of the camera, \emph{i.e.}, $\rho > 0$.
    \label{ass:positiveDepth}
\end{assumption}
This assumption holds in practice, since we are considering perspective cameras, and thus points/planes behind the optical center cannot be observed.

Let us consider the observer in \cref{sec:mwLinesObs} with inputs $(\hat{\bm{\nu}}_c,\bm{\omega}_c)$, where $\hat{\bm{\nu}}_c = \frac{\hat{\mathbf{s}}}{\hat{\psi}}$.
The error dynamics in \cref{eq:mwerrordyn}, are now given as
\begin{equation}
    \begin{split}
        \dot{\tilde{\mathbf{c}}} & = - k_{\mathbf{c}}\tilde{\mathbf{c}} \\
        \dot{\tilde{\bm{\tau}}}_i & = \mathbf{T}_i(\bm{\tau}_i,\mathbf{c})^T (\bm{\nu}_c\chi - \hat{\bm{\nu}}_c\hat{\chi}_i) - k_{\bm{\tau}_i}\tilde{\bm{\tau}}_i \\
        \dot{\tilde{\chi}}_i & = \mathbf{X}_i(\bm{\tau}_i,\mathbf{c})^T(\bm{\nu}_c\chi_i^2-\hat{\bm{\nu}}_c\hat{\chi}_i^2) - \\ & \hspace{70pt} k_{\chi_i}(\mathbf{T}_i(\bm{\tau}_i,\mathbf{c})^T(\hat{\bm{\nu}}_c))^T\tilde{\bm{\tau}}_i.
    \end{split}
    \label{eq:mwerrordynEst}
\end{equation}

We can now describe the main result of this section.
\begin{theorem}
    If \cref{ass:positiveDepth} and the Persistence of excitation\cite[Lemma A.3]{marino2010} hold for the observer proposed in \cite{grabe2015} and the one in \cref{eq:mwobsdyn}) along with the conditions in \cref{theo:lineStable}, then, $[\tilde{\mathbf{c}},\tilde{\bm{\tau}}_i,\tilde{\chi}_i]^T = \mathbf{0}$ is an asymptotically stable equilibrium point of the system in \cref{eq:mwerrordyn} with inputs $(\hat{\bm{\nu}}_c,\bm{\omega}_c)$.
    \label{theo:velPointDepthEst}
\end{theorem}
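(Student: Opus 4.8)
The plan is to read \cref{eq:mwerrordynEst} as a \emph{cascade}: the driving subsystem is the velocity/plane-depth observer error of \cref{eq:velPlaneDepthObs}, and the driven subsystem is the line-observer error, which collapses onto the nominal error dynamics \cref{eq:mwerrordyn} analysed in \cref{theo:lineStable} as soon as the velocity error vanishes. First I would set $\tilde{\bm{\nu}}_c = \bm{\nu}_c - \hat{\bm{\nu}}_c$ with $\hat{\bm{\nu}}_c = \hat{\mathbf{s}}/\hat{\psi}$. By \cref{ass:positiveDepth}, $\psi = 1/\rho > 0$; since the observer \cref{eq:velPlaneDepthObs} is exponentially stable \cite{grabe2015} under the persistence-of-excitation condition, $(\tilde{\mathbf{s}},\tilde{\psi}) \to \mathbf{0}$ exponentially, so $\hat{\psi}$ stays bounded away from zero after a finite time (or for all time if the initial estimate lies in a suitable neighbourhood of $\psi$), and therefore $\tilde{\bm{\nu}}_c$ is well defined and decays exponentially.

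Next I would expose the cascade structure algebraically. Adding and subtracting terms in \cref{eq:mwerrordynEst} gives $\bm{\nu}_c\chi_i - \hat{\bm{\nu}}_c\hat{\chi}_i = \bm{\nu}_c\tilde{\chi}_i + \tilde{\bm{\nu}}_c\hat{\chi}_i$, then $\bm{\nu}_c\chi_i^2 - \hat{\bm{\nu}}_c\hat{\chi}_i^2 = \bm{\nu}_c(\chi_i+\hat{\chi}_i)\tilde{\chi}_i + \tilde{\bm{\nu}}_c\hat{\chi}_i^2$, and $\mathbf{T}_i^T\hat{\bm{\nu}}_c = \mathbf{T}_i^T\bm{\nu}_c - \mathbf{T}_i^T\tilde{\bm{\nu}}_c$. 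Substituting, the $(\tilde{\mathbf{c}},\tilde{\bm{\tau}}_i,\tilde{\chi}_i)$ dynamics become exactly the nominal error dynamics \cref{eq:mwerrordyn} plus an interconnection term that is linear in $\tilde{\bm{\nu}}_c$ with coefficients polynomial in the remaining states and in the (bounded) inputs $\bm{\nu}_c,\bm{\omega}_c$. Since $\dot{\tilde{\mathbf{c}}} = -k_{\mathbf{c}}\tilde{\mathbf{c}}$ is untouched, the genuine cascade is $[\tilde{\bm{\tau}}_i,\tilde{\chi}_i]$ driven by $[\tilde{\mathbf{s}},\tilde{\psi}]$.

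I would then invoke a standard result on asymptotic stability of cascaded systems (e.g. \cite[Lemma 4.7]{khalil2015}, or the Panteley--Lor\'ia lemma), checking its three hypotheses: (i) the origin of the driving subsystem is exponentially stable, shown above; (ii) the origin of the unforced driven subsystem is asymptotically stable by \cref{theo:lineStable}, with Lyapunov function \cref{eq:lyapLine} and under the conditions assumed here; (iii) the interconnection term vanishes with $\tilde{\bm{\nu}}_c$ and obeys the required growth bound. For (iii) I would use \cref{assump:depth}: bounding $l_i$ away from zero bounds $\chi_i = 1/l_i$, so $\hat{\chi}_i = \chi_i - \tilde{\chi}_i$ grows at most affinely in the driven state once the exponentially decaying factor $\tilde{\bm{\nu}}_c$ is pulled out. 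A short comparison/Gr\"onwall argument applied to the $\dot V$ computation of \cref{theo:lineStable} then rules out finite escape time and yields boundedness of all signals; together with (i)--(ii) this gives asymptotic stability of $[\tilde{\mathbf{c}},\tilde{\bm{\tau}}_i,\tilde{\chi}_i]^T = \mathbf{0}$. An equivalent route: after $\tilde{\bm{\nu}}_c$ has decayed, recompute $\dot V$ exactly as in \cref{theo:lineStable} plus an exponentially small perturbation, and close with Barbalat's lemma \cite[Lemma8.2]{khalil2015} verbatim, using the persistence-of-excitation condition, now verified with $\bm{\nu}_c$ replaced by $\hat{\bm{\nu}}_c$.

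The main obstacle is the coupling term $\tilde{\bm{\nu}}_c\hat{\chi}_i^2$ in the $\tilde{\chi}_i$ equation: it is quadratic in the very state whose boundedness we must establish, the classical peaking / finite-escape pitfall for cascades. Taming it forces us to lean on \cref{assump:depth} to bound $\chi_i$ and, most likely, to restrict the initial estimation errors to a compact set, so the honest conclusion is \emph{regional} (local) rather than global asymptotic stability. Pinning down that region, and confirming that the persistence-of-excitation hypothesis is preserved when $\bm{\nu}_c$ is swapped for $\hat{\bm{\nu}}_c$, is where the real effort goes.
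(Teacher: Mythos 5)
Your proposal is correct and its core is the same as the paper's argument: the paper also writes the perturbed error dynamics \cref{eq:mwerrordynEst}, computes $\dot V$ for the Lyapunov function \cref{eq:lyapLine}, splits the coupling exactly as you do via $\bm{\nu}_c\chi_i-\hat{\bm{\nu}}_c\hat{\chi}_i=\tilde{\bm{\nu}}_c\chi_i+\hat{\bm{\nu}}_c\tilde{\chi}_i$ (a trivially equivalent resplitting of yours) and $\bm{\nu}_c\chi_i^2-\hat{\bm{\nu}}_c\hat{\chi}_i^2=\bm{\nu}_c(\chi_i^2-\hat{\chi}_i^2)+\tilde{\bm{\nu}}_c\hat{\chi}_i^2$, cites the exponential convergence of $\tilde{\bm{\nu}}_c$ from \cite{grabe2015}, and then declares that the surviving terms coincide with \cref{eq:lyapunovDerivative}, so asymptotic stability follows from \cref{theo:lineStable}; this is precisely your ``equivalent route.'' Where you differ is in packaging and in rigor: your primary framing through a standard cascade result (Khalil's lemma on cascades, or Panteley--Lor\'ia) makes explicit the hypotheses the paper leaves silent --- that $\hat{\bm{\nu}}_c=\hat{\mathbf{s}}/\hat{\psi}$ is well defined only if $\hat{\psi}$ stays bounded away from zero, that the interconnection term $\tilde{\bm{\nu}}_c\hat{\chi}_i^2$ is quadratic in the driven state so boundedness and absence of finite escape during the transient must be argued (peaking), and that the persistence-of-excitation condition of \cref{theo:lineStable} must be re-verified with $\hat{\bm{\nu}}_c$ in place of $\bm{\nu}_c$. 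The paper's two-line conclusion after \cref{eq:lyapunovDerivativeCascade3} simply asserts that the $\tilde{\bm{\nu}}_c$-terms disappear in the limit and does not address any of these points, so your observation that the honest conclusion is regional (local) asymptotic stability, obtained under \cref{assump:depth}, \cref{ass:positiveDepth} and a restriction of initial errors to a compact set, is a fair and arguably necessary strengthening of the published argument rather than a deviation from it.
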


\begin{proof}
    Let us consider the Lyapunov candidate function in \cref{eq:lyapLine}. Taking the time derivative and using \cref{eq:mwerrordynEst}, we get 
    \begin{multline}
        \dot{V}(\tilde{\mathbf{c}},\tilde{\bm{\tau}}_i,\tilde{\chi}_i) = -k_{\mathbf{c}}\|\tilde{\mathbf{c}}\|^2 + \sum_{i = 1}^N \Big( -k_{\bm{\tau}_i}\|\tilde{\bm{\tau}}_i\|^2 + \\
        \left. \tilde{\bm{\tau}}_i^T(\mathbf{T}_i(\bm{\tau}_i,\mathbf{c})^T (\bm{\nu}_c\chi - \hat{\bm{\nu}}_c\hat{\chi}_i)) - \tilde{\chi}_i (\mathbf{T}_i(\bm{\tau}_i,\mathbf{c})^T\hat{\bm{\nu}}_c)^T\tilde{\bm{\tau}}_i + \right. \\
        \frac{1}{k_{\chi_i}}\tilde{\chi}_i \mathbf{X}_i(\bm{\tau}_i,\mathbf{c})^T(\bm{\nu}_c\chi_i^2-\hat{\bm{\nu}}_c\hat{\chi}_i^2)  \Big).
        \label{eq:lyapunovDerivativeCascade}
    \end{multline}
    Exploiting the fact that $\bm{\nu}_c \chi_i -\hat{\bm{\nu}}_c \hat{\chi}_i = \tilde{\bm{\nu}}_c \chi_i + \hat{\bm{\nu}}_c \tilde{\chi}_i$, with $\tilde{\bm{\nu}}_c = \bm{\nu}_c - \hat{\bm{\nu}}_c$, and replacing in \cref{eq:lyapunovDerivativeCascade}, yields
    \begin{multline}
        \dot{V}(\tilde{\mathbf{c}},\tilde{\bm{\tau}}_i,\tilde{\chi}_i) = -k_{\mathbf{c}}\|\tilde{\mathbf{c}}\|^2 + \sum_{i = 1}^N \big( -k_{\bm{\tau}_i}\|\tilde{\bm{\tau}}_i\|^2 + \\
        \tilde{\bm{\tau}}_i^T(\mathbf{T}_i(\bm{\tau}_i,\mathbf{c})^T \tilde{\bm{\nu}}_c\chi_i + \frac{1}{k_{\chi_i}}\tilde{\chi}_i \mathbf{X}_i(\bm{\tau}_i,\mathbf{c})^T(\bm{\nu}_c\chi_i^2-\hat{\bm{\nu}}_c\hat{\chi}_i^2)  \big).
        \label{eq:lyapunovDerivativeCascade2}
    \end{multline}
    Finally, by applying $\bm{\nu}_c \chi_i^2 - \hat{\bm{\nu}}_c \hat{\chi}_i^2 = \bm{\nu}_c(\chi_i^2 - \hat{\chi}_i^2) + \tilde{\bm{\nu}}_c\hat{\chi}_i^2$ in \cref{eq:lyapunovDerivativeCascade2}, we obtain
    \begin{multline}
        \dot{V}(\tilde{\mathbf{c}},\tilde{\bm{\tau}}_i,\tilde{\chi}_i) = -k_{\mathbf{c}}\|\tilde{\mathbf{c}}\|^2 + \sum_{i = 1}^N \Big( -k_{\bm{\tau}_i}\|\tilde{\bm{\tau}}_i\|^2 + \\
        \left. \tilde{\bm{\tau}}_i^T(\mathbf{T}_i(\bm{\tau}_i,\mathbf{c})^T \tilde{\bm{\nu}}_c\chi_i + \frac{1}{k_{\chi_i}} \mathbf{X}_i(\bm{\tau}_i,\mathbf{c})^T \bm{\nu}_c (\chi_i + \hat{\chi}_i)\tilde{\chi}_i^2 + \right. \\
        \frac{1}{k_{\chi_i}} \mathbf{X}_i(\bm{\tau}_i,\mathbf{c})^T\tilde{\bm{\nu}}_c \hat{\chi}_i^2\tilde{\chi}_i \Big).
        \label{eq:lyapunovDerivativeCascade3}
    \end{multline}
    From \cite{grabe2015}, $\tilde{\bm{\nu}}_c$ converges exponentially to zero as time goes to infinity.
    The remaining terms in \cref{eq:lyapunovDerivativeCascade3} give us the same $\dot{V}$ as in \cref{eq:lyapunovDerivative}, where it is proven that the origin is asymptotically stable.
\end{proof}
\section{Results}
\label{sec:results}

This section presents the simulation results. 
We start by evaluating the depth observer for lines in Manhattan World, presented in \cref{sec:mwLinesObs} -- henceforward referred to as \emph{MWLEst} --, and compare to the observer in \cite{mateus2019} -- henceforward referred to as \emph{$N$-line Sphere}.
However, that observer considers a single line. Thus we expand its state space to account for multiple lines.
We assess the convergence time and failure rate of both observers.
Then, the same observers are evaluated with noisy data.
Finally, we show the convergence of the cascade in a simulated environment.

\subsection{Depth Observer for Lines in Manhattan World}
\label{sec:resultsObs}

The evaluation consisted of two tests conducted with MATLAB. 
We start by validating the \emph{MWLEst} method with noiseless data against the \emph{$N$-line Sphere} method in terms of percentage of success, convergence speed, and traveled distance.
Then, we run experiments with increasing noise levels to evaluate the robustness of the methods to measurement noise.
Notice that the evaluation metrics for the noise and noiseless scenarios are different. The estimation error is not ensured to converge to zero with noise, only to a bounded region about zero.
Furthermore, computing direction and depth errors in noiseless data will yield zero for both methods.
For all the experiments the gain of the \emph{$N$-line Sphere} observer, and the gains $k_{\chi_i}$ were set to $100$. The gains $k_{\mathbf{c}_i}$ and $k_{\bm{\tau}_i}$ were set to $2\sqrt{k_{\chi_i}}$.

\vspace{5pt}\noindent{ Data Generation:}
A perspective camera was used as the imaging device, whose intrinsic parameters (for more details see \cite{hartley2003}) are defined as $\mathbf{I}_3 \in \mathbb{R}^{3\times3}$. 
All six degrees of freedom (DoF) are assumed to be controllable. 
The Manhattan World frame is generated by randomly selecting an orthogonal matrix $\mathbf{R}_{\mathcal{C}}^{\mathcal{W}}$.
A point for each line is also randomly drawn in a $25$ side cube in front of the camera.
The points are then assigned to a principal direction, and the moments and depth are computed.
The Cayley parameters are computed from $\mathbf{R}_{\mathcal{C}}^{\mathcal{W}}$, as explained in \cref{sec:mwLines}.
Finally, the vectors $\bm{\tau}_i$ are computed.
The initial state of the measured quantities is set to its actual value. The unknowns are selected randomly.
Since the unknown variables of the two methods differ but have the same scale (the inverse of the depth), the unknown quantities in the \emph{$N$-line Sphere} method are set with unit norm and then scaled with the initial scale used by the \emph{MWLEst} method.

\begin{table}[t]
    \centering
    \resizebox{1\linewidth}{!}{\setlength{\tabcolsep}{5.0pt}\begin{tabular}{@{}l@{\hskip 5pt}ccc@{}}
        \toprule
        \thead[l]{Method} & \thead{Percentage \\ of Success} & \thead{Convergence \\ Time (s)} & \thead{Distance \\ (m)} \\ \midrule
        MWLEst & 88.6 & 4.8 & 2.46 \\ 
        $N$-line Sphere & 28.3 & 6.08 & 2.05 \\
        \toprule
        \end{tabular}
    }
    \caption{\it Performance evaluation of the \emph{MWLEst} method with respect to the \emph{$N$-line Sphere}. For 1000 randomly generated trials, we compare both methods in terms of the success rate (\% of Success), the median of the convergence time (Conv. Time (s)), and the median of the distance the agent took (Median Distance (m)).}
    \label{tab:evaluation_noiseless}
\end{table}

\vspace{5pt}\noindent{\bf Noiseless Data:}
The first test consists of running both methods (\emph{MWLEst} and the \emph{$N$-line Sphere}) for $1000$ distinct trials in a noise-free scenario.
This test evaluates the percentage of success of both observers\footnote{We consider that the method succeeded if it did not diverge.}, the median convergence time\footnote{We assume that the method converged when the error is less than 1\% of the initial error.}, and the median distance traveled by the camera. 
The results are shown in \cref{tab:evaluation_noiseless}. As we can see from these results, \emph{MWLEst} beats the \emph{$N$-line Sphere} by some margin in both the percentage of success and convergence time. We see that the \emph{MWLEst} travels for a longer distance than the \emph{$N$-line Sphere} but converges faster.

\begin{figure}
    \centering
    \subfloat[Median direction error of $1000$ runs with different noise levels.]{
        \includegraphics[width=0.45\textwidth]{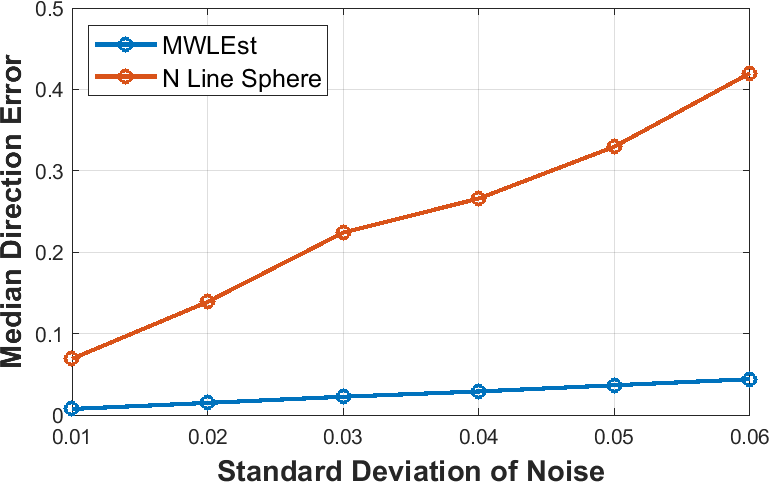}
        \label{fig:mwdirectionError}
    } \hfill
    \subfloat[Median depth error of $1000$ runs with different noise levels.]{
        \includegraphics[width=0.45\textwidth]{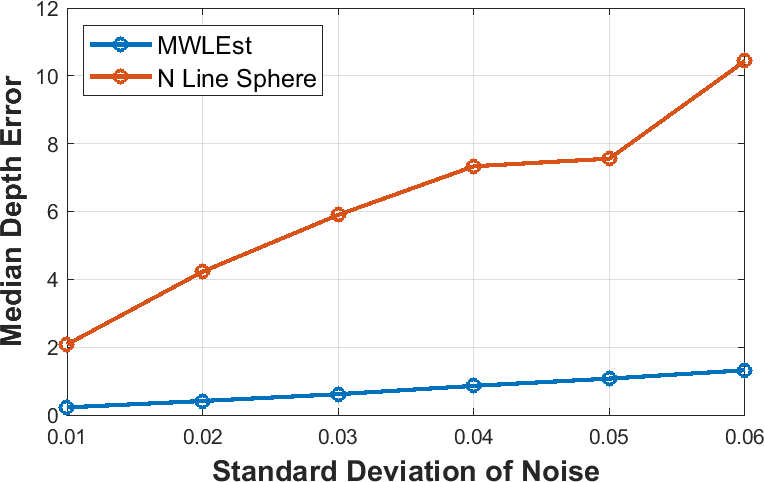}
        \label{fig:mwdepthError}
    }
    \caption{\it Median estimation errors of both methods for $1000$ runs with different standard deviations of the error. On the left, the estimation error of the direction vectors. Keep in mind that the \emph{MWLEst} method can recover the directions from the image, while the \emph{$N$-line Sphere} estimates them. On the right, the estimation error of the line depths.}
    \label{fig:simmwNoiseResults}
\end{figure}

\vspace{5pt}\noindent{\bf Noisy Data:}
Noise was added to the measurements, i.e., the Manhattan frame and the projection of the moment vector to that frame. 
A rotation matrix was applied to the Manhattan frame, with Euler angles sampled from a uniform distribution with zero mean and increasing standard deviation. 
Noise was added to the moments with a rotation since they are unit vectors
The noisy moments are then projected to the noisy frame, yielding the parameters $\tau_{ij}$.
Six different noise levels were considered, and $1000$ randomly generated runs were executed for each level.
The moment vector can be measured in both methods; thus, its error is not presented.
However, we stress that the error of the moment vector is in the same order of magnitude as the added error.
The direction and depth errors are defined as
\begin{equation}
    \epsilon_{\mathbf{d}} = \arccos(\hat{\mathbf{d}}_i^T\mathbf{d}_i), \ \text{and} \ \
    \epsilon_l =  \| \hat{l}_i - l_i \|.
    \label{eq:errors}
\end{equation}
The median direction and depth errors for each noise level considered are presented in \cref{fig:simmwNoiseResults}\subref{fig:mwdirectionError} and \cref{fig:simmwNoiseResults}\subref{fig:mwdepthError} respectively.
We can see that the \emph{MWLEst} outperforms the \emph{$N$-line Sphere} method by a large margin.

\subsection{Observer Cascade}
\label{sec:resultsCascade}

\begin{figure}[t]
    \centering
    \subfloat[State estimation error of the observer in \cref{eq:velPlaneDepthObs}.]{
        \includegraphics[width=0.4\textwidth]{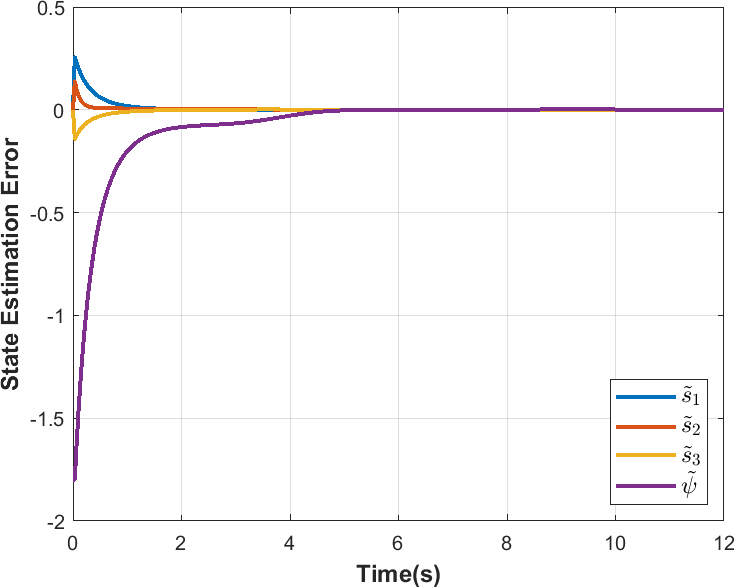}
        \label{fig:velDepthPlaneSim}
    } \,    
    \subfloat[State estimation error of the MW lines depth observer in \cref{eq:mwobsdyn} with the estimated velocity.]{
        \includegraphics[width=0.4\textwidth]{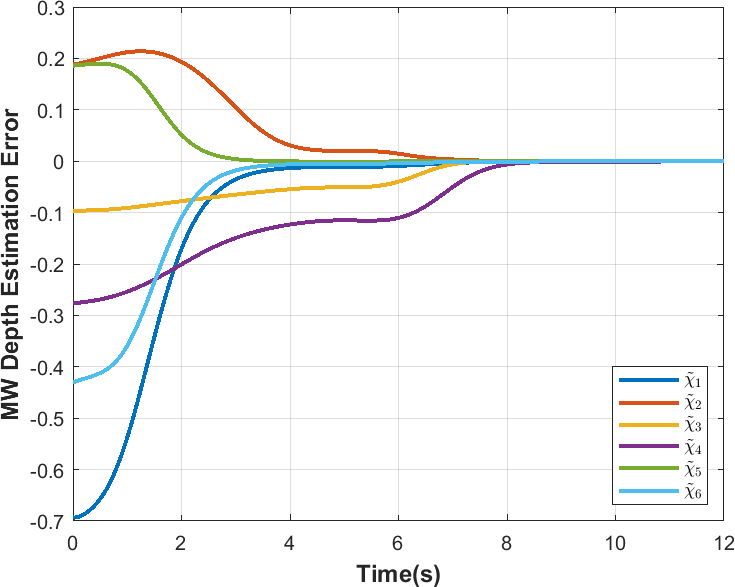}
        \label{fig:velDepthMWSim}
    }
    \caption{\it State estimation errors of the observer in \cref{eq:velPlaneDepthObs}, and \cref{eq:mwobsdyn}. The plane depth observer is used to estimate the camera linear velocity, which is inputted into the MWLEst.}
    \label{fig:velDepthSimResults}
\end{figure}

This section presents the results of the observer cascade.
For that purpose, a perspective camera was considered, with its intrinsic parameters given by an identity matrix.
The angular velocity and linear acceleration were defined with sinusoidal signals to ease integration and differentiation. Each component has a phase shift with respect to the others, so that every signal has a different value.
The signals were select such that the maximum acceleration norm was $\text{max} \| \mathbf{a}_{\mathcal{I}} \| \simeq 2$, and the maximum norm of the angular velocity was $\text{max} \| \bm{\omega}_{\mathcal{I}} \| \simeq 0.5$.

The plane was chosen to be parallel with the ground plane, with the camera facing down, i.e., the normal of the plane at the start of the experiment is parallel to the camera's optical axis.
The plane depth $\rho$ was selected randomly with a uniform distribution centered at five units.
Lines were generated similarly to \cref{sec:resultsObs}.

The simulation consisted of using the two observers at the same time for $12$ seconds.
The plane depth observer is used to retrieve the scale of the linear velocity.
That estimate is then fed to the MW line depth observer for six lines.
The state estimation error of plane depth observer is presented in \cref{fig:velDepthSimResults}\subref{fig:velDepthPlaneSim}. The gains were set as $k_{\mathbf{s}} = 2$, and $k_{\rho} = 20$.
The observer converged in approximately $4.33$ seconds.
The state estimation error of the six lines using the \emph{MWLEst} is presented in \cref{fig:velDepthSimResults}\subref{fig:velDepthMWSim}.
The gains were set as $k_{\mathbf{c}} = 20$, $k_{\bm{\tau}_i} = 20$, and $k_{\chi_i} = 200$.
The observer converged in $7.9$ seconds.
\section{Conclusions}
\label{sec:conclusions}

We proposed a method to estimate multiple lines.
By building the model using the Manhattan World Assumption, we show that the number of state variables is reduced from $4N$ to $3N+3$.
Since the orthogonal direction, yielding the Manhattan frame, can be retrieved from a single image, only the lines' depth are unknown. 
In the previous methods, both the direction and the depth need to be estimated.
A model for lines in a Manhattan world is proposed. The dynamics of this model are derived by taking advantage of the Cayley transform to have a minimal representation of the Manhattan frame.
An observer is presented to estimate the Manhattan world, which is asymptotically stable.
The approach is compared in simulation with an extension of a state-of-the-art observer and is shown to diverge less often, converge faster, and be more robust to measurement noise.

Furthermore, the previous methods assume that the camera velocities are known.
In this work, we relaxed that assumption by using an IMU coupled with the camera.
This allows us to obtain the linear velocity by exploiting a state-of-the-art observer for plane depth estimation.
We then proposed an observer cascade where the estimate from the above observer is used as the input to the MW line depth observer.
Stability analysis of the cascade is presented, showing that the state estimation error is asymptotically stable, with the linear velocity given by the first observer.

Future work includes applying the observer cascade to real data and developing Active Vision strategies that allow us to define control inputs that optimize the convergence of both observers.

\clearpage

\bibliographystyle{IEEEtran}
\bibliography{citations}

\end{document}